\documentclass[runningheads]{llncs}
\usepackage{lineno,hyperref}
\usepackage{array,multirow}
\usepackage{graphicx}
\usepackage{subfig}
\usepackage{mathrsfs}  
\usepackage{amsfonts}
\usepackage{amsmath}
\usepackage{lmodern}

\newcommand{\eqdef}{\overset{\mathrm{def}}{=\joinrel=}}

\makeatletter
\renewcommand\@biblabel[1]{}
\makeatother
\modulolinenumbers[5]

\usepackage{algorithm2e}

\begin{document}

\title{Feature Selection by a Mechanism Design\thanks{The views expressed herein are those of the author and should not be attributed to the IMF, its Executive Board, or its management.}}

\author{Xingwei Hu}
\authorrunning{X Hu}
\institute{International Monetary Fund\\ Washington, DC 20431, USA \\
	\email{xhu@imf.org}}

\maketitle              

\begin{abstract}
In constructing an econometric or statistical model, we pick relevant features or variables from many candidates.
A coalitional game is set up to study the selection problem where the players are the candidates and the payoff function is a performance measurement in all possible modeling scenarios.
Thus, in theory, an irrelevant feature is equivalent to a dummy player in the game, which contributes nothing to all modeling situations. 
The hypothesis test of zero mean contribution is the rule to decide a feature is irrelevant or not.
In our mechanism design, the end goal perfectly matches the expected model performance with the expected sum of individual marginal effects.
Within a class of non-informative likelihood among all modeling opportunities, the matching equation results in a specific valuation for each feature.
After estimating the valuation and its standard deviation, we drop any candidate feature if its valuation is not significantly different from zero.
In the simulation studies, our new approach significantly outperforms several popular methods used in practice, and its accuracy is robust to the choice of the payoff function. 

\keywords{feature selection  \and variable selection \and mechanism design \and Shapley value \and matching \and hypothesis test.}
\end{abstract}

\section{Introduction}\label{sect:introduction}
 
Feature selection or variable selection is one of the few most fundamental goals in statistical learning (e.g., Zou, 2006).
In the selection process, the algorithm picks some relevant features or variables in constructing a model, dropping redundant or irrelevant ones.
In general, all these candidate features correlate with each other at various levels; some may be partially redundant. 
To address this issue, this paper designs a mechanism in a coalitional game and evaluates each candidate feature. 
Based on the evaluation profile, each candidate is selected or not by a statistical decision.

There are dozens of selection methods in the literature. 
They generally trade off the model fit with the model size (e.g., Fan and Li, 2006); that is,
they penalize the model fit statistic or loosen up the minimal discrepancy statistic in order to achieve a certain level of simplicity.
The model fit could also be a measure of the error for out-of-sample prediction.
However, truth and accuracy can't be completely compromised with simplicity if we pursue the true features or causality in the data generating process, rather than solely making the model smaller.

The trade-off solutions may not result in highly precise identification of the true features. 
First, a fit statistic or penalty measure could have many variants, each leading to a different solution; and none dominates the others. 
The BIC (Bayesian or Schwarz Information Criterion) method has a fixed penalty in terms of the sample size and the number of unknown parameters, for example, while the Lasso's penalty  is proportional to the magnitude of the unknown parameters (Tibshirani, 1996). 
Secondly, each trade-off may have a set of particular assumptions. 
For example, the results from the AIC (Akaike Information Criterion) and BIC are sensitive to the normality-distribution assumption (e.g., Dziak et al., 2020); 
subset search algorithms ignore the model uncertainty (e.g., Clyde and George, 2004);
Lasso and its adaptive version (Zou, 2006) assume the independence among the regressors.
Also, to reduce the model size by sacrificing the model fit, it is subtle to leverage the unknown amount of the penalty, and few researches have studied the delicacy of the balance.
Ridge regression, for example, shrinks all coefficients by a unknown uniform factor but does not set any coefficients to zero.
Calibration of the unknown factor, however, links back to the coefficients.
Lastly, minimization of the penalized discrepancy functions or objective functions generally reaches local optimal solutions.
In summary, these methods could estimate a good (``best" or ``optimal" in some sense) solution but not exclusively seek the true regressors.

In this paper, we propose a new approach which avoids the trade-off by ignoring any model fitness. 
We use the candidate features and a model performance function to set up a coalitional game.
The choice of the performance function is of secondary or even little importance as long as any irrelevant feature is statistically a dummy player in the game.
In a specific model, a feature's contribution is its marginal effect to the model.
Allowing for model uncertainty, we also assume a class of non-informative priors for the modeling scenarios.
So a feature's importance is the expected marginal effect in all potential modelings.
Thus, our approach is also Bayesian and uses model averaging (e.g., George and McCulloch, 1997;  O'Hara and Sillanpaa, 2009), except that the prior is not pre-set but identified from a deliberate design. 

By design, we match the expected performance of the potential model such that the value of our discrepancy objective function is zero on average, making any penalty unlikely. 
The matching also identifies a unique prior from the class and a specific valuation formula for all candidate features.
Using this particular prior, we estimate the value of each feature and its standard error.
We mark a candidate feature as irrelevant if its valuation is not statistically significant.
When comparing the new approach with others, such as AIC, BIC, stepwise regression, Lasso, and adaptive Lasso using simulation,
we find that it largely outperforms them.

The rest of the paper is organized as follows: 
Section \ref{sect:coalitional_game} sets up a coalitional game for the valuation problem. 
Section \ref{sect:valuation} derives the valuation formula by solving a matching equation. 
The section also justifies the valuation by the Shapley value (1953) and the dummy player property.
Section \ref{sect:est_algorthms} provides two algorithms to estimate the valuation and its standard error.
Section \ref{sect:compare_simulation} compares the simulation performance between our new solution with five payoff functions and five other variable selection methods.
Finally, Section \ref{sect:conclusion} concludes with further comments.
Our exposition is self-contained, and the proofs are in the Appendix.

\section{Coalitional Game with Random Coalitions}\label{sect:coalitional_game}
 
In the exposition, we use the following notations.
All candidate features $X_1$, $X_2$, ..., $X_n$ are collectively denoted by the set $\mathbb{N} = \{1, 2,\cdots,n\}$. 
For any $T\subseteq \mathbb{N}$, $v(T)$ is a performance measure when we model the data using the features or variables from $T$.
For the empty set $\emptyset$, $v(\emptyset)$ is the performance when the model does not involve any variables from $\mathbb{N}$.
As usual, ``$\setminus$" is for set subtraction and ``$\cup$" for set union. 
The overbar is used in naming the element of a singleton set; for example,  ``$\overline{i}$"  for the set $\{ i \}$.
For any subset $T\subseteq \mathbb{N}$, $|T|$ denotes its cardinality.
For brevity, we often use $t$ and $n$ for $|T|$ and $|\mathbb{N}|$, respectively.

For a simple example, we consider a linear regression for the dependent variable $Y_t$ where $t$ is the observation index for the data.
There are $n$ candidate explanatory variables $X_{it}$, $i=1,2,...,n$, in addition to some pre-set regressors $Z_{jt}$.
The general unrestricted model (GUM) is then
\begin{equation}\label{eq:GUM}
Y_t = \beta_0 + \sum\limits_{i\in \mathbb{N}} \beta_i X_{it} + \sum\limits_j \alpha_j Z_{jt} + \epsilon_t
\end{equation}
where $\beta_i$ and $\alpha_j$ are unknown coefficients and $\epsilon_t$ is the residual.
The performance measure $v$ could be the variance explained, the forecast accuracy, the significance statistic,
the probability of avoiding outliers, etc. 
In particular, $v(\emptyset)$ is the performance when $Y_t$ is modeled by the constant $\beta_0$ and $Z_{jt}$'s.

At the inception of the learning process, the machine may have no knowledge about the true features in the data.
To address the model uncertainty, we let the random subset $\mathbf{S}\subseteq \mathbb{N}$ consist of the features in the true model. 
We also denote the estimated model by $\hat{\mathbf{S}} \subseteq \mathbb{N}$.
In contract to the GUM model (\ref{eq:GUM}), therefore, the true model reduces to 
\begin{equation}\label{eq:true_model}
Y_t = \beta_0 + \sum\limits_{i \in \mathbf{S}} \beta_i  X_{it} + \sum\limits_j \alpha_j Z_{jt} + \epsilon_t
\end{equation}

\begin{figure}
\centering
\parbox{5cm}{
\centering
\includegraphics[height=3cm, width=3cm]{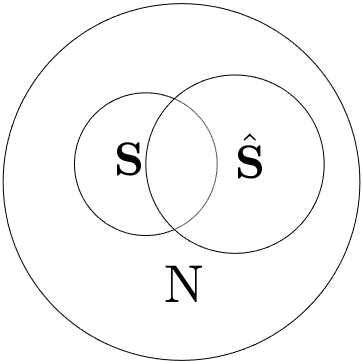}
\caption{Two types of errors.}
\label{fig:two_types_error}}
\qquad
\begin{minipage}{5cm}
\centering
\includegraphics[height=3cm, width=4.5cm]{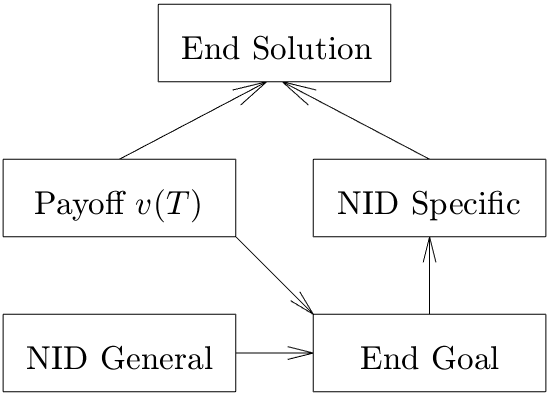}
\caption{The mechanism design.}
\label{fig:design}
\end{minipage}
\end{figure}

In general, $\hat{\mathbf{S}} \not = \mathbf{S}$ and the estimation could make two types of errors (cf Figure \ref{fig:two_types_error}).
In Type 1, called ``under-fitting", the selection process fails to select some relevant features, which are in $\mathbf{S} \setminus \hat{\mathbf{S}}$.
In the second type,  called ``over-fitting", it wrongly selects some irrelevant features, which are in the set of $\hat{\mathbf{S}} \setminus \mathbf{S}$.
Also, $\hat{\mathbf{S}}$ could have both errors when neither $\hat{\mathbf{S}} \setminus \mathbf{S} = \emptyset$ nor $\mathbf{S} \setminus \hat{\mathbf{S}} = \emptyset$.
Finally, when $\hat{\mathbf{S}}=\mathbf{S}$, there is neither under-fitting nor over-fitting; and the model is exactly identified.
We measure the precision of the estimation method by the percentage of exact identification in a large number of simulated models.

The design works on the uncertainty of the modeling scenarios.
For any $T\subseteq \mathbb{N}$, let $P_{_T}$ be the probability of $\mathbf{S} = T$. 
With no prior knowledge about the potential model, the learning process has no reason to discriminate any two subsets of $\mathbb{N}$ with the same size.
This defines a class of non-informative distributions (NID) for $\mathbf{S}$:
\vskip .1cm
\begin{center}
\textbf{NID} Assumption : $P_{_T} = P_{_Z}$ whenever $T$ and $Z$ have the same size.
\end{center}
\vskip .1cm
Unlike the other Bayesian variable selection methodologies, the NID assumption does not specify a unique distribution or a density over a family of distributions.
Rather, it is a restriction over the density of $\mathbf{S}$. 
Under the NID assumption, the next section solves a unique solution for $P_{_T}$ by taking an objective-first approach.
In the objective, the expected $v(\mathbf{S})$ is distributed to all candidate features by their expected marginal effects in $v$.
Thus, the design problem is the inverse of the traditional Bayesian analysis, which is typically devoted to the analysis of the posterior performance for a given prior.
Figure \ref{fig:design} illustrates the process. 
First, the payoff function $v$ and the NID restriction form the end objective of the design.
Next, the requirements of the end goal derive a updated NID density. 
Finally, the new NID probability and the function $v$ generate the final solution.

\section{Valuation of the Coalitional Game}\label{sect:valuation}

In the above section, we have three elements in the coalitional game: a player set $\mathbb{N}$; a payoff function $v: 2^\mathbb{N} \to R$; and a NID assumption on the random coalition $\mathbf{S}$.
In this section, we modify the efficiency axiom of the Shapley value (1953, page 309) to derive a unique solution for all players in $\mathbb{N}$.

Player $i$'s value in the game could be explained by its marginal effect in $v(T)$ when $\mathbf{S}=T$. 
When $i \not \in T$, then $T$ faces an opportunity cost
$v(T \cup \overline{i}) - v(T)$, due to $i$'s absence from $T$.
In other words, the feature could have increased the collective performance by $v(T\cup \overline{i} ) - v(T)$ if we had added it to $T$. 
This scenario happens with probability $P_{_T}$.
Alternately speaking, this marginal effect is equivalent to $v(T) - v(T \setminus \overline{i})$ when $i\in T$, but with probability $P_{_{T\setminus \overline{i}}}$, i.e., 
$X_i$'s participation in the model contributes $v(T) - v(T \setminus \overline{i})$ to the collective performance.

We define the loss function as the expected marginal effect 
\begin{equation}\label{eq:lambda}
\lambda_i[v] \
\eqdef \ \mathbb{E} \left [v({\mathbf S}\cup \overline{i}) - v({\mathbf S}) \right ]
= \sum\limits_{T \subseteq \mathbb{N}} P_{_T} \left [v(T\cup \overline{i}) - v(T) \right ].
\end{equation}
Clearly, the loss function satisfies both the symmetry and the aggregation axioms of the Shapley value (1953, page 309), under the NID assumption.
But, the efficiency axiom, that is, $\sum\limits_{i \in \mathbb{N}} \lambda_i[v] = v(\mathbb{N}) - v(\emptyset)$, is unlikely valid because the true model performance is 
$v(\mathbf{S}) - v(\emptyset)$, not $v(\mathbb{N}) - v(\emptyset)$.
In general, $\mathbb{N}$ contains unrelated and redundant features; thus there is noise in $v(\mathbb{N})$, which should be stretched or squeezed toward $v(\mathbf{S})$.
However, $v(\mathbf{S})$ is random and is not visible.
Therefore, we replace the efficiency axiom with the following matching equation:
\begin{equation}\label{eq:divide_expectation_lambda}
\sum\limits_{i\in \mathbb{N}} \lambda_i[v;\mu] \equiv  \mathbb{E} v(\mathbf{S}) - v(\emptyset)
\end{equation}
where ``$\equiv$" means that the functional equation holds for any $v: 2^\mathbb{N} \to R$.
On average, $v(\mathbb{N}) - \mathbb{E} v(\mathbf{S})$ is the over-fitting that the candidate features make and should be omitted in our identification process.
By crowding out the average noise to zero, (\ref{eq:divide_expectation_lambda}) sets an identity restriction on the probability density of $\mathbf{S}$.
By Theorem \ref{thm:divide_expectation_lambda}, the restriction leads to a unique solution for $P_{_T}$.

\begin{theorem}\label{thm:divide_expectation_lambda} 
Assume the NID. 
Then (\ref{eq:divide_expectation_lambda}) holds if and only if 
\begin{equation}\label{eq:SV-prior}
P_{_T} = \frac{t!(n-t)!}{(n+1)!}, \quad \forall \ T \subseteq \mathbb{N}.
\end{equation}
\end{theorem}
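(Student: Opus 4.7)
The plan is to reduce the functional identity (\ref{eq:divide_expectation_lambda}) to a system of linear equations in the quantities $p_t \eqdef P_{_T}$ (well-defined under NID, since $P_{_T}$ depends only on $t = |T|$), and then to solve that system. Because the identity must hold for every $v:2^{\mathbb{N}}\to R$ and the functions $\{v \mapsto v(S)\}_{S\subseteq \mathbb{N}}$ form a basis of the dual of $\mathbb{R}^{2^{\mathbb{N}}}$, I can equate the coefficient of each $v(S)$ on both sides to obtain necessary and sufficient conditions on the $p_t$.

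First I would expand the left-hand side. Since $v(T\cup \overline{i})-v(T)=0$ when $i\in T$, one has
\begin{equation*}
\sum_{i\in \mathbb{N}} \lambda_i[v] \;=\; \sum_{i\in\mathbb{N}}\sum_{T\not\ni i} P_{_T}\bigl[v(T\cup\overline{i})-v(T)\bigr].
\end{equation*}
Collecting, for a fixed $S\subseteq\mathbb{N}$ with $s=|S|$, the coefficient of $v(S)$ comes from two sources: the positive contributions from pairs $(i,T)$ with $T\cup\overline{i}=S$ and $i\notin T$ (giving a $+P_{_{S\setminus\overline{i}}}$ for each $i\in S$) and the negative contributions from pairs $(i,T)$ with $T=S$ and $i\notin S$. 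Under NID this collapses to $s\,p_{s-1}-(n-s)\,p_s$ for $s\ge 1$ and $-n\,p_0$ for $s=0$. On the right-hand side, $\mathbb{E}v(\mathbf{S})-v(\emptyset)$ contributes $p_s$ to the coefficient of $v(S)$ for $s\ge 1$ and $p_0-1$ for $s=0$.

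Matching coefficients then yields the boundary condition $(n+1)p_0 = 1$ and the recursion
\begin{equation*}
s\,p_{s-1} \;=\; (n-s+1)\,p_s \qquad \text{for } 1\le s\le n.
\end{equation*}
Starting from $p_0 = 1/(n+1)$ and iterating gives $p_s = \frac{s!(n-s)!}{(n+1)!}$, which is exactly (\ref{eq:SV-prior}); by construction this is both necessary and sufficient. As a sanity check I would verify that the resulting $p_t$ form a valid probability distribution, i.e. $\sum_{t=0}^{n}\binom{n}{t}p_t = \sum_{t=0}^n \frac{n!}{(n+1)!} = 1$, so no additional normalization constraint is violated.

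The only delicate step is the bookkeeping when collecting the coefficient of $v(S)$ — one must be careful to separate the $s\ge 1$ and $s=0$ cases so that the boundary equation pinning down $p_0$ is not lost, and to argue that equating coefficients is justified by letting $v$ range over the indicator functionals $v_S(T) = \mathbf{1}[T=S]$. Once this is in hand, the recursion solves cleanly by telescoping, and both directions of the ``if and only if'' follow at once.
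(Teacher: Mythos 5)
Your proof is correct and follows essentially the same route as the paper's: expand both sides of (\ref{eq:divide_expectation_lambda}) as linear combinations of the values $v(S)$, equate coefficients (justified because the identity must hold for all $v$), and solve the resulting boundary condition plus recursion. The only cosmetic difference is that you work directly with $p_s = P_{_T}$ while the paper parametrizes by $\delta_t = \binom{n}{t}P_{_T}$, under which your recursion $s\,p_{s-1}=(n-s+1)\,p_s$ is exactly the paper's $(n-t+1)\delta_{t-1}=(n-t+1)\delta_t$.
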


\begin{proof} 
See Supplemental Appendix A1. 
\end{proof}

With the density $P_{_T}$ in (\ref{eq:SV-prior}), the size of $\mathbf{S}$ has a uniform distribution over the integers $0,1,2,...,n$.
Denote the probability of $|\mathbf{S}|=t$ by
$$
\delta_t \eqdef \sum\limits_{T\subseteq \mathbb{N}: |T|=t} P_{_T}, \quad \forall \ t=0,1,2,...,n.
$$
Then (\ref{eq:SV-prior}) implies that
\begin{equation}\label{eq:delta_sv}
\delta_t = 
\left ( 
\begin{array}{c}
n \\
t
\end{array}
\right )
P_{_T}
= \frac{n!}{t!(n-t)!} \frac{t!(n-t)!}{(n+1)!}
= \frac{1}{n+1}.
\end{equation}
Also with (\ref{eq:SV-prior}), the loss function (\ref{eq:lambda}) simplifies to
\begin{equation}\label{eq:loss_SV_prior}
\lambda_i[v] = \sum\limits_{T \subseteq \mathbb{N}} \frac{t!(n-t)!}{(n+1)!} \left [v(T\cup \overline{i}) - v(T) \right ].
\end{equation}
In contrast to the Shapley value, i.e.,
\begin{equation}\label{eq:shapley_value}
\Psi_i[v] = \sum\limits_{T \subseteq \mathbb{N}} \frac{t!(n-t-1)!}{n!} \left [v(T\cup \overline{i}) - v(T) \right ],
\end{equation}
$\lambda_i[v]$ has smaller weights on the marginal effects $\left [v(T\cup \overline{i}) - v(T) \right ]$,
and the largest de-weighting $\frac{n-t}{n+1}$ occurs when $t$ is close to $n$, shrinking the over-fitting.

We could also derive (\ref{eq:loss_SV_prior}) from the Shapley value and the dummy player property.
A dummy player $j$ has zero marginal effects on all $T\subseteq \mathbb{N}$, i.e. $v(T\cup \overline{j})-v(T)=0$ for all $T\subseteq \mathbb{N}$.
In theory, an irrelevant feature is equivalent to a dummy player in the coalitional game;
in reality, however, the payoff function $v$ and its marginal effects are subject to the sampling errors.
To fill the gap in between, a statistical decision rule judges if the feature's average marginal effect is zero, or tests if its marginal effects are white noise.
In the game with $\mathbf{S}$ as the carriers (Shapley, 1953), the players in $\mathbb{N} \setminus \mathbf{S}$ are indeed dummies.
The payoff function in the new game is defined as
$$
v_{_{\mathbf{S}}} (T) \eqdef v(\mathbf{S} \cap T), \quad \mathrm{for}\ \mathrm{any} \ T\subseteq \mathbb{N}.
$$
Any irrelevant feature, which is not in $\mathbf{S}$, is a dummy to $v_{_\mathbf{S}}$; hence, it has zero Shapley value in $\Psi [v_{_\mathbf{S}}]$.
As the efficiency axiom holds in the new game, we could apply the vector $\Psi [v_{_\mathbf{S}}]$ to distribute $v(\mathbf{S}) - v(\emptyset)$ to all members in $\mathbb{N}$, in which dummies get nothing. So does any irrelevant feature, on average, as implied in Theorem \ref{thm:expected_SV}.

\begin{theorem}\label{thm:expected_SV} 
Given the NID assumption for $\mathbf{S}$, the expected Shapley value $\Psi [v_{_\mathbf{S}}]$ is the loss function $\lambda[v]$, i.e.,
$\mathbb{E} \Psi_i [v_{_\mathbf{S}}] \equiv \lambda_i[v]$ for all $i \in \mathbb{N}$,
if and only if (\ref{eq:SV-prior}).
\end{theorem}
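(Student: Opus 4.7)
The plan is to use the carrier property of the Shapley value to collapse $\Psi_i[v_{_\mathbf{S}}]$ onto subsets of $\mathbf{S}$, take the NID expectation, and then match coefficients in the resulting functional identity against $\lambda_i[v]$ in order to pin down $P_{_T}$ in both directions at once.

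I would first note that every $j\in\mathbb{N}\setminus\mathbf{S}$ is a dummy for $v_{_\mathbf{S}}$, since $v_{_\mathbf{S}}(T\cup\overline{j})=v(\mathbf{S}\cap T)=v_{_\mathbf{S}}(T)$, so $\Psi_i[v_{_\mathbf{S}}]=0$ whenever $i\notin\mathbf{S}$. For $i\in\mathbf{S}$, writing each index $T$ in (\ref{eq:shapley_value}) as $T=U\cup T''$ with $U=\mathbf{S}\cap T\subseteq\mathbf{S}\setminus\overline{i}$ and $T''\subseteq\mathbb{N}\setminus\mathbf{S}$, and summing over $T''$ by means of the Beta-type identity
\begin{equation*}
\sum_{k=0}^{n-s}\binom{n-s}{k}\frac{(u+k)!(n-u-k-1)!}{n!}=\frac{u!(s-u-1)!}{s!},
\end{equation*}
recognizes $\Psi_i[v_{_\mathbf{S}}]$ as the Shapley value of $i$ in the reduced game on $\mathbf{S}$. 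Under a general NID with $p_s\eqdef P_{_T}$ for $|T|=s$, taking the expectation, swapping the order of summation, and counting the $\binom{n-u-1}{s-u-1}$ extensions of $U\cup\overline{i}$ to an $\mathbf{S}$ of size $s$ would then yield
\begin{equation*}
\mathbb{E}\,\Psi_i[v_{_\mathbf{S}}]=\sum_{T\subseteq\mathbb{N}:\,i\notin T}w_t\bigl[v(T\cup\overline{i})-v(T)\bigr],\qquad w_t=t!(n-t-1)!\sum_{s=t+1}^{n}\frac{p_s}{s!(n-s)!}.
\end{equation*}

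Because only $T\not\ni i$ contributes nontrivially to $\lambda_i[v]$ in (\ref{eq:lambda}), and the functional equation $\mathbb{E}\,\Psi_i[v_{_\mathbf{S}}]\equiv\lambda_i[v]$ must hold for every payoff $v$, probing it with the elementary test payoff $v(B)=\mathbf{1}\{B=T^*\cup\overline{i}\}$ for each fixed $T^*\not\ni i$ isolates the scalar relations $w_t=p_t$ for all $t\in\{0,\ldots,n-1\}$. Setting $a_s\eqdef p_s/[s!(n-s)!]$, the relation $w_t=p_t$ rewrites as $(n-t)\,a_t=\sum_{s>t}a_s$, and subtracting this identity at $t$ and $t+1$ telescopes to $a_t=a_{t+1}$; hence $a_s$ is independent of $s$, and the normalization $\sum_t\binom{n}{t}p_t=1$ forces this common value to be $1/(n+1)!$, recovering (\ref{eq:SV-prior}). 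For the converse direction, substituting (\ref{eq:SV-prior}) makes the inner sum in $w_t$ collapse to $(n-t)/(n+1)!$ and gives $w_t=p_t$ directly, so $\mathbb{E}\,\Psi_i[v_{_\mathbf{S}}]\equiv\lambda_i[v]$. The only non-routine step is the carrier reduction together with the Beta-type identity; everything after that is coefficient matching and a one-line telescoping recursion.
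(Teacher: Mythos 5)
Your argument is correct and follows essentially the same route as the paper's: reduce $\Psi_i[v_{_\mathbf{S}}]$ to the Shapley value of the subgame on $\mathbf{S}$ via the same Beta/Vandermonde identity (Lemma 1 of Hu, 2006, in the paper's version), take the NID expectation and swap sums to get a single size-dependent coefficient $w_t$, and match coefficients to obtain a telescoping recursion whose only normalized solution is (\ref{eq:SV-prior}). The only cosmetic differences are that you parametrize by $p_s=P_{_T}$ rather than by the size probabilities $\delta_s$, and that you justify the coefficient matching explicitly with indicator test payoffs, which the paper leaves implicit.
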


\begin{proof}
See Supplemental Appendix A2. 
\end{proof}

\section{Estimation Algorithms}\label{sect:est_algorthms}
 
In this section, we randomize the sequential approach in Shapley (1953) to estimate $\lambda[v]$ and its standard error.
Consequently, the decision rule is as: the feature $X_i$ is irrelevant if $\lambda_i[v]$ is not significantly different from zero.

Simulation can often solve complex problems.
In general, $n$ is large in practice, making exact calculation of (\ref{eq:loss_SV_prior}) computationally costly.
In contrast to the simple expression for the mean, the variance of $v(\mathbf{S}\cup \overline{i}) - v(\mathbf{S})$ involves the squared $\lambda_i[v]$ and the mean squared marginal effects, making the calculation even worse.
Thus, we seek an alternate random sampling technique which also involves $\lambda_i[v]$ as the mean. 
Let $\Omega$ be the set of orderings of all candidate features in $\mathbb{N}$.
We randomly take an ordering $\tau$ from $\Omega$:
\begin{equation}\label{fig:sequence}
\tau: \quad \emptyset\ \longrightarrow \  i_1\ \longrightarrow \ \cdots\  \longrightarrow \ i\ \longrightarrow \ \cdots\ \longrightarrow \ i_n.
\end{equation}
Let 
$\Xi_i^\tau$ be the set of features in $\mathbb{N}$ which precede $i$ in the ordering $\tau$, and let
$$
\phi^\tau_i \ \eqdef \ v(\Xi_i^\tau \cup \overline{i}) - v(\Xi_i^\tau)
$$
be $i$'s sequential marginal effect in $\tau$.
Shapley (1953) showed that $\Psi_i [v] = \mathbb{E} [\phi^\tau_i]$ where the expectation is under the uniform distribution on $\Omega$.
Comparing this with Theorem \ref{thm:sq}, the de-weighting on the linked marginal effects (\ref{eq:random_sq_loss}) is the same as that in (\ref{eq:loss_SV_prior}).

\begin{theorem}\label{thm:sq}
Assume $\tau$ has the uniform distribution on $\Omega$. Then
\begin{equation}\label{eq:random_sq_loss}
\lambda_i[v]
=
\mathbb{E} \left [\frac{n-|\Xi_i^\tau|}{n+1} \phi^\tau_i \right ].
\end{equation}
\end{theorem}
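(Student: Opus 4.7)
The strategy is to expand the expectation on the right‐hand side directly by conditioning on the value of $\Xi_i^\tau$, and match the resulting weights to those in the closed form (\ref{eq:loss_SV_prior}). All the content lies in a single combinatorial identity on the weights; once it is in place the statement follows by term-by-term comparison.

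First I would establish the distribution of $\Xi_i^\tau$ under the uniform measure on $\Omega$. For any $T \subseteq \mathbb{N}\setminus\overline{i}$ with $|T|=t$, the number of orderings in which the elements of $T$ (in some order) occupy the first $t$ positions, $i$ occupies position $t+1$, and the remaining $n-t-1$ elements (in some order) occupy the last positions is $t!\,(n-t-1)!$, so $\Pr(\Xi_i^\tau = T) = t!(n-t-1)!/n!$. This is exactly Shapley's (1953) sampling identity that underlies $\Psi_i[v] = \mathbb{E}[\phi^\tau_i]$, and I would simply invoke it.

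Next I would write, by the law of total expectation,
\begin{equation*}
\mathbb{E}\!\left[\frac{n-|\Xi_i^\tau|}{n+1}\,\phi^\tau_i\right]
= \sum_{T \subseteq \mathbb{N}\setminus\overline{i}} \frac{n-t}{n+1}\cdot\frac{t!(n-t-1)!}{n!}\,\bigl[v(T\cup\overline{i})-v(T)\bigr].
\end{equation*}
The core step is the identity
\begin{equation*}
\frac{n-t}{n+1}\cdot\frac{t!(n-t-1)!}{n!} \;=\; \frac{t!(n-t)!}{(n+1)!},
\end{equation*}
which follows from $(n-t)\cdot(n-t-1)! = (n-t)!$ and $(n+1)\cdot n! = (n+1)!$. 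Substituting this gives
\begin{equation*}
\mathbb{E}\!\left[\frac{n-|\Xi_i^\tau|}{n+1}\,\phi^\tau_i\right]
= \sum_{T \subseteq \mathbb{N}\setminus\overline{i}} \frac{t!(n-t)!}{(n+1)!}\,\bigl[v(T\cup\overline{i})-v(T)\bigr].
\end{equation*}

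Finally I would observe that in the expression (\ref{eq:loss_SV_prior}) for $\lambda_i[v]$, any term with $i\in T$ contributes zero because $T\cup\overline{i}=T$, so the sum over $T\subseteq\mathbb{N}$ collapses to a sum over $T\subseteq\mathbb{N}\setminus\overline{i}$ with the same weights. This matches the expression above and completes the proof. There is no genuinely hard step: the only point to be careful about is the bookkeeping—restricting the Shapley-style conditional sum to $T\not\ni i$ and then verifying that the deflation factor $(n-t)/(n+1)$ is exactly what converts the Shapley weights $t!(n-t-1)!/n!$ into the loss-function weights $t!(n-t)!/(n+1)!$.
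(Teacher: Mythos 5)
Your proposal is correct and follows essentially the same route as the paper's proof: condition on $\Xi_i^\tau$, use the Shapley sampling distribution $\Pr(\Xi_i^\tau=T)=t!(n-t-1)!/n!$, and verify that the deflation factor $(n-t)/(n+1)$ converts these weights into $t!(n-t)!/(n+1)!$. The only addition is your explicit remark that terms with $i\in T$ vanish in (\ref{eq:loss_SV_prior}), which the paper leaves implicit; this is a harmless (and welcome) bit of bookkeeping.
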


\begin{proof}
See Supplemental Appendix A3. 
\end{proof}

\begin{figure}[htb]
\centering
\begin{minipage}{.80\linewidth}
\begin{algorithm}[H]\label{alg:loss_random_sq}
\SetAlgoLined
Pick a large integer $\gamma \ge 100$ as the sample size for $\tau$\;
$\hat \lambda_i \longleftarrow 0$ \ and \ $\hat s_i \longleftarrow 0$\;
\For{$k \gets 1$ to $\gamma$}{
	Randomly select a $\tau$ from $\Omega$\;
	$\hat \lambda_i \longleftarrow \hat \lambda_i + \frac{n-|\Xi_i^\tau|}{n+1} \phi^\tau_i$ \ and \
	$\hat s_i \longleftarrow \hat s_i + \left (\frac{n-|\Xi_i^\tau|}{n+1} \phi^\tau_i \right )^2$\;
}
$\hat \lambda_i \longleftarrow \hat \lambda_i/\gamma$ \ and \ $\hat s_i \longleftarrow \hat s_i/\gamma$\;
$\hat \sigma_i \longleftarrow \sqrt{(\hat s_i)^2 - (\hat \lambda_i)^2} / \sqrt{\gamma}$ \;
return $z \longleftarrow \hat \lambda_i / \hat \sigma_i $. 
\vskip .3cm
\caption{Estimate $\lambda_i[v]$ by $\gamma$ random orderings}
\end{algorithm}
\end{minipage}
\end{figure}

Theorem \ref{thm:sq} signals a hypothesis test for zero mean of the weighted sequential marginal effects $\frac{n-|\Xi_i^\tau|}{n+1} \phi^\tau_i$,
$$
H_0: \lambda_i[v]=0 \qquad \mathrm{versus} \qquad H_1: \lambda_i[v]\not = 0.
$$
The experimental designer randomizes a large number of $\tau$'s in (\ref{eq:random_sq_loss}) to generate a random i.i.d. sample of $\frac{n-|\Xi_i^\tau|}{n+1} \phi^\tau_i$.
From this sample, we can extract many useful statistics, such as the sample median, mean, standard deviation, confidence intervals, and quantiles.
The sample mean  $\hat \lambda_i$ and its standard deviation $\hat\sigma_i$ help to decide if $\lambda_i[v]$ is statistically zero or not; the other statistics are useful in choosing the sample size for $\tau$.
Algorithm \ref{alg:loss_random_sq} details these steps, where $\hat s_i$ is for the expected value of squared $\frac{n-|\Xi_i^\tau|}{n+1} \phi^\tau_i$.
Under the null hypothesis $H_0$, the z-statistic $\hat \lambda_i / \hat \sigma_i$ has an approximate standard normal distribution.
At the $\alpha\%$ significance, therefore, we identify $X_i$ as an irrelevant feature if the test statistic $\left |z \right | < z_{_{1-\alpha/2}}$, where $z_{_{1-\alpha/2}}$ is the critical value.
Otherwise, $X_i$ is classified as relevant.
In the algorithm, all $\lambda_i[v]$'s and their standard deviations are estimated at the same time. 

Candidate features behave differently in the ordering (\ref{fig:sequence}).
For an irrelevant one, if it lies in the first few positions in $\tau$, its marginal effect could be large due to the sampling errors.
Omitted-variable bias could also account for the effect if the first few features in $\tau$ does not completely contain $\mathbf{S}$.
As a consequence of the sampling errors, the effect may even have different signs in different orderings.
But with the same probability, the feature could be in the last few positions in $\tau$, resulting in almost zero marginal effect.
Thus, the standard error estimate would be large while the mean value remains low.
Therefore, its z-statistic tends to be insignificant.
In contrast, a relevant variable $X_i$ would have nontrivial marginal effects in all positions for any $\tau$ and the effect likely remains the same sign.
On average, the effect's magnitude tends to gradually decrease with the feature's increasing position in $\tau$ because its explanatory and predictive power is partially mitigated by the features in $\Xi_i^\tau$.
So, its mean marginal effect remains high and its standard deviation remains low. 
Therefore, its z-statistic is high in magnitude, and it is likely to be selected into $\hat{\mathbf{S}}$.

\begin{figure}[htb]
\centering
\begin{minipage}{.83\linewidth}
\begin{algorithm}[H]\label{alg:sequential_acceptance}
\SetAlgoLined
$\mathbb{R} \longleftarrow \mathbb{N}$ \ and \ $\hat{\mathbf{S}} \longleftarrow \emptyset$\;
\While{$\mathbb{R} \not = \emptyset$}{
Apply Algorithm \ref{alg:loss_random_sq} to estimate $\lambda_i [v_{_\mathbb{R}}]$ and its standard error for all $i\in \mathbb{R}$, while keeping the features in $\hat{\mathbf{S}}$ as additional regressors in the regression\;
Decompose $\mathbb{R} = \mathbb{W} \cup (\mathbb{R}\setminus \mathbb{W})$ where $\mathbb{W}$ contains $\mathbb{R}$'s features whose absolute z-statistics in $v_{_\mathbb{R}}$ are larger than $z_{_{1-\alpha/2}}$\;
	\eIf{ $\mathbb{W} \not= \emptyset$}{
		$\mathbb{R} \longleftarrow \mathbb{R} \setminus \mathbb{W}$ \ and \
		$\hat{\mathbf{S}} \longleftarrow \hat{\mathbf{S}} \cup \mathbb{W}$\;
	}{
		Return $\hat{\mathbf{S}}$.
	}
}
\vskip .3cm
\caption{$v$-adaptive sequential acceptance}
\end{algorithm}
\end{minipage}
\end{figure}

Finally, the density (\ref{eq:SV-prior}) may be too generic and neglects the specific structure of the function $v$.
Algorithm \ref{alg:sequential_acceptance} provides a $v$-adaptive solution by sequentially accepting features into $\hat{\mathbf{S}}$.
At the beginning, all candidates have the same likelihood to be selected.
After the first group is added to $\hat{\mathbf{S}}$, the rest features have the same probability to be adopted by $\hat{\mathbf{S}}$.
The decision depends upon previously accepted features.
So a partially redundant feature would likely be rejected in the second and following rounds, if its counterpart is already in $\hat{\mathbf{S}}$. 
After the second group is also added to $\hat{\mathbf{S}}$, the remaining features' performance is conditional on the updated $\hat{\mathbf{S}}$.
This fashion continues until no feature could be added to $\hat{\mathbf{S}}$.
From the second-round acceptance, the conditional probability density $P_{_T}$ in $\mathbb{R}$ has bound with the function $v$, through previously accepted features.

\section{Compare with Other Methods in Simulation Studies} \label{sect:compare_simulation}

In this section, we conduct several simulation experiments to study the performance of our new selection method.\footnote{EViews, MatLab, and R programs are available upon request.} 
The performance is compared with those from other methods widely used in practice, including AIC, BIC, Stepwise regression, Lasso, and adaptive Lasso.
Also, we investigate the accuracy's robustness to different choices of $v$. 
In terms of exact identification of the actual models, the new approach significantly outperforms the others.
Also, it is robust to the following choices of $v$.

There are five choices for $v(T)$ in the experiments. 
First, we use the R-squared which is the percentage of the dependent variable's variance explained by the model $T$.
The second choice is the adjusted R-squared, adjusted by the model size $|\mathbf{S}|$.
The F-statistic is the next option.
The fourth is the BIC statistic which is adjusted by the number of regressors and the number of data observations.
Last, we use the root mean squared error (RMSE) of out-of-sample forecast.
In this option, the data are randomly split into 2 parts: 80\% are used for estimation and 20\% for out-of-sample forecast.
We use $\lambda^{R2}, \lambda^{AR2}, \lambda^{F}, \lambda^{BIC}, \lambda^{RM}$ to indicate these five options in Table \ref{tb:compareselectionmethods}, respectively.

In the experiments, we let $n=20$ and $|\mathbf{S}|=2,4,6,...,18$.
Given a specific value of $|\mathbf{S}|$, the real relationship is
\begin{equation} \label{eq:simulate_data}
Y_t = \beta_0 + \sum\limits_{i=1}^{|\mathbf{S}|} \beta_i X_{it} + \sum\limits_{i=|\mathbf{S}|+1}^{20} 0 \times X_{it} + \epsilon_t
\end{equation}
for some model-specific unknown coefficients $\beta_i$ and white noise $\epsilon_t$. 
Thus, the true model variables are $\mathbf{S} = \{ 1, 2, \cdots, |\mathbf{S}| \}$.
Each model has a dataset of 100 simulated observations. 
In generating a dataset, we first simulate 20 independent variables using the normal random number generator. 
To add interdependence among them, we multiply these $20\times 1$ vectors by a $20\times 20$ matrix, which is also randomly generated. 
To remove the normality, we transform the data by a nonlinear function, such as exponential, square, cubic, or logarithmic of absolute value. 
We then simulate a set of coefficients $\beta_i$ and the residuals $\epsilon_t$. 
Finally, we apply (\ref{eq:simulate_data}) to calculate the dependent variable $Y_t$.

We use the following options: $100$ random orderings and $.05$ significance level in Algorithm \ref{alg:sequential_acceptance};
for the AIC and BIC approaches, search all $2^n$ subsets of $\mathbb{N}$ and choose the subset with the least information criterion as the estimated model $\hat{\mathbf{S}}$;
for the Lasso or adaptive Lasso method, apply the tenfold cross-validation to calibrate the penalty coefficient. 
There are multiple implementations of Lasso, adaptive Lasso, and stepwise regression by the software packages Eviews, MatLab, and R; only the best results are reported in Table \ref{tb:compareselectionmethods}.
For each simulated dataset, we apply any of these variable selection methods to find an estimated model $\hat{\mathbf{S}}$.
After that, we compute the discrepancy between  $\hat{\mathbf{S}}$ and  $\mathbf{S}$,
including the numbers of two types of errors in $\hat{\mathbf{S}}$.
Finally, we aggregate the discrepancy statistics for all 1,000 datasets.

\begin{table}
\caption{Comparison of Selection Methods in 1,000 Simulated Models$^*$}
\label{tb:compareselectionmethods}
\centering
{ \small
\begin{tabular}{l|c|||c|c|c|c|c||c|c|c|c|c} \hline \hline
$|\mathbf{S}|$&Discrepancy&Lasso&aLasso&Stepwise&AIC&BIC&$\lambda^{R2}$&$\lambda^{AR2}$&$\lambda^{F}$&$\lambda^{BIC}$&$\lambda^{RM}$\\ \hline \hline
\parbox[t]{2mm}{\multirow{5}{*}{2}} 
&$\hat{\mathbf{S}}=\mathbf{S}$              	&159&159&415&52 &458&935&943&934&942&918\\ 
&$|\mathbf{S}\setminus\hat{\mathbf{S}}|=1$		&6  &7  &10 &3  &12 &4  &7  &9  &6  &12 \\
&$|\mathbf{S}\setminus\hat{\mathbf{S}}|\ge 2$   &0  &0  &0  &0  &1  &0  &0  &0  &0  &4  \\ 
&$|\hat{\mathbf{S}}\setminus \mathbf{S}|=1$     &29 &27 &367&147&360&63 &55 &67 &59 &71 \\
&$|\hat{\mathbf{S}}\setminus \mathbf{S}|\ge 2$  &811&814&213&805&173&0  &0  &0  &0  &9  \\ \hline
\parbox[t]{2mm}{\multirow{5}{*}{4}} 
&$\hat{\mathbf{S}}=\mathbf{S}$              	&131&126&453&57 &508&935&942&934&942&914\\ 
&$|\mathbf{S}\setminus\hat{\mathbf{S}}|=1$		&8  &7  &7  &2  &7  &5  &9  &9  &18 &11 \\
&$|\mathbf{S}\setminus\hat{\mathbf{S}}|\ge 2$   &0  &0  &0  &1  &0  &1  &1  &1  &2  &12 \\ 
&$|\hat{\mathbf{S}}\setminus \mathbf{S}|=1$     &45 &23 &346&144&341&62 &49 &61 &41 &48 \\
&$|\hat{\mathbf{S}}\setminus \mathbf{S}|\ge 2$  &821&856&198&805&145&4  &1  &1  &3  &16 \\ \hline
\parbox[t]{2mm}{\multirow{5}{*}{6}} 
&$\hat{\mathbf{S}}=\mathbf{S}$              	&55 &63 &510&73 &527&925&940&933&941&913\\ 
&$|\mathbf{S}\setminus\hat{\mathbf{S}}|=1$		&16 &13 &5  &2  &4  &9  &17 &21 &25 &19 \\
&$|\mathbf{S}\setminus\hat{\mathbf{S}}|\ge 2$   &0  &0  &1  &0  &0  &3  &4  &2  &3  &14 \\ 
&$|\hat{\mathbf{S}}\setminus \mathbf{S}|=1$     &53 & 49&313&134&338&37 &44 &59 &37 &42 \\
&$|\hat{\mathbf{S}}\setminus \mathbf{S}|\ge 2$  &887&878&181&796&135&39 &5  &5  &8  &17 \\ \hline
\parbox[t]{2mm}{\multirow{5}{*}{8}}
&$\hat{\mathbf{S}}=\mathbf{S}$              	&20 &28 &550&127&572&923&935&931&940&904\\ 
&$|\mathbf{S}\setminus\hat{\mathbf{S}}|=1$		&19 &18 &16 &8  &10 &22 &27 &30 &26 &27 \\
&$|\mathbf{S}\setminus\hat{\mathbf{S}}|\ge 2$   &1  &5  &2  &2  &2  &6  &5  &6  &4  &15 \\ 
&$|\hat{\mathbf{S}}\setminus \mathbf{S}|=1$     &59 &43 &301&219&312&39 &38 &43 &27 &43 \\
&$|\hat{\mathbf{S}}\setminus \mathbf{S}|\ge 2$  &904&922&137&665&121&21 &16 &8  &7  &19 \\ \hline
\parbox[t]{2mm}{\multirow{5}{*}{10}}
&$\hat{\mathbf{S}}=\mathbf{S}$              	&14 &14 &604&146&607&922&934&931&939&901\\ 
&$|\mathbf{S}\setminus\hat{\mathbf{S}}|=1$		&24 &26 &22 &19 &20 &34 &33 &32 &34 &39 \\
&$|\mathbf{S}\setminus\hat{\mathbf{S}}|\ge 2$   &0  &12 &4  &2  &1  &7  &11 &15 &7  &12 \\ 
&$|\hat{\mathbf{S}}\setminus \mathbf{S}|=1$     &63 &43 &274&327&291&35 &31 &38 &24 &38 \\
&$|\hat{\mathbf{S}}\setminus \mathbf{S}|\ge 2$  &916&948&98 &519&88 &9  &17 &9  &13 &17 \\ \hline
\parbox[t]{2mm}{\multirow{5}{*}{12}}
& $\hat{\mathbf{S}}=\mathbf{S}$				    &12 &  7&660&181&659&924&933&932&937&904\\
&$|\mathbf{S}\setminus\hat{\mathbf{S}}|=1$      &24 & 27&15 &14 &15 &44 &36 &39 &36 &30 \\
&$|\mathbf{S}\setminus\hat{\mathbf{S}}|\ge 2$   &0  & 12&1  &0  &1  &12 &12 &6  &3  &16 \\
&$|\hat{\mathbf{S}}\setminus \mathbf{S}|=1$	 	&66 & 43&251&319&265&19 &29 &25 &19 &37 \\
&$|\hat{\mathbf{S}}\setminus \mathbf{S}|\ge 2$  &918&949&78 &498&65 &7  &10 &13 &11 &19 \\ \hline
\parbox[t]{2mm}{\multirow{5}{*}{14}}
& $\hat{\mathbf{S}}=\mathbf{S}$				    &0  &0  &711&224&675&930&936&930&938&911\\
&$|\mathbf{S}\setminus\hat{\mathbf{S}}|=1$      &24 &29 &11 &5  &12 &45 &45 &43 &47 &46 \\
&$|\mathbf{S}\setminus\hat{\mathbf{S}}|\ge 2$   &0  &10 &1  &0  &0  &14 &8  &3  &1  &4  \\
&$|\hat{\mathbf{S}}\setminus \mathbf{S}|=1$	 	&72 &59 &224&303&253&12 &21 &21 &18 &27 \\
&$|\hat{\mathbf{S}}\setminus \mathbf{S}|\ge 2$  &907&905&71 &477&63 &2  &9  &5  &12 &15 \\ \hline	
\parbox[t]{2mm}{\multirow{5}{*}{16}}
& $\hat{\mathbf{S}}=\mathbf{S}$				    &1  &  2&781&288&766&931&937&929&938&919\\
&$|\mathbf{S}\setminus\hat{\mathbf{S}}|=1$      &137&92 &7  &6  &7  &57 &58 &54 &51 &52 \\
&$|\mathbf{S}\setminus\hat{\mathbf{S}}|\ge 2$   &4  &23 &0  &0  &0  &3  &1  &1  &0  &1  \\
&$|\hat{\mathbf{S}}\setminus \mathbf{S}|=1$	 	&359&322&187&291&184&12 &7  &17 &11 &21 \\
&$|\hat{\mathbf{S}}\setminus \mathbf{S}|\ge 2$  &502&567&31 &421&47 &0  &2  &1  &3  &14  \\ \hline
\parbox[t]{2mm}{\multirow{5}{*}{18}}
& $\hat{\mathbf{S}}=\mathbf{S}$				    &0  &  0&878&321&859&934&941&931&940&920\\
&$|\mathbf{S}\setminus\hat{\mathbf{S}}|=1$      &297&325&6  &6  &3  &59 &62 &69 &61 &69 \\
&$|\mathbf{S}\setminus\hat{\mathbf{S}}|\ge 2$   &64 &55 &0  &0  &0  &1  &0  &2  &0  &1  \\
&$|\hat{\mathbf{S}}\setminus \mathbf{S}|=1$	 	&622&599&121&654&143&8  &2  &6  &3  &13 \\
&$|\hat{\mathbf{S}}\setminus \mathbf{S}|\ge 2$  &21 &23 &3  &23 &2  &0  &0  &0  &0  &9  \\ \hline\hline	
\end{tabular} 
}
\end{table}

Table \ref{tb:compareselectionmethods} summarize the discrepancy statistics for all these six selection methods and five choices of $v$ in the 1,000 simulated models for nine values of $|\mathbf{S}|$.
We compare these results from the aspects of precision, over-fitting, under-fitting, and computational cost.
\begin{itemize}
\item Exact Identification $\hat{\mathbf{S}}=\mathbf{S}$:
the $\lambda$-type solutions have an average accuracy of 92\%, compared to 34\% for the others. 
When either $|\mathbf{S}\setminus\hat{\mathbf{S}}|=1$ or $|\hat{\mathbf{S}}\setminus \mathbf{S}|=1$, the mismatch has only one or two features.
In the near-matched cases, the $\lambda$ methods also perform extremely well.
\item Under-Fitting $\mathbf{S}\setminus \hat{\mathbf{S}} \not = \emptyset$:
based on these numbers in the table, both the $\lambda$ methods and the others have similar chances of under-fitting.
\item Over-Fitting $\hat{\mathbf{S}}\setminus \mathbf{S} \not = \emptyset$:
the $\lambda$ methods successfully block the irrelevant features from being selected. 
The average chance to make this type of error is about 4\%, compared to $54\%$ for the others.
\item Computational Cost: both the AIC and BIC methods are extremely costly in computation.
The programming experience also shows that the Lasso and adaptive Lasso have the same magnitude of computational cost as the $\lambda$ methods.
The fastest is the stepwise regression.
\end{itemize}

Finally, the last five columns show that the performance of the $\lambda$ methods are not sensitive to the choices of $v$.
Also, the accuracy is robust to the sizes of $\mathbf{S}$.
However, the size-adjusted methods ($\lambda^{AR2}$ and $\lambda^{BIC}$) are slightly better than the un-adjusted ones ($\lambda^{R2}$, $\lambda^{F}$, and $\lambda^{RM}$).
For $\lambda^{R2}$ or $\lambda^{F}$, frankly, $\lambda_i[v]$ has a small positive but unknown value for an irrelevant feature $X_i$.
This could be an issue when the standard error is neither large nor small.
The $\lambda^{RM}$ looks slightly less accurate than the other four, but it uses only 80\% of data for estimation.

\section{Conclusions}\label{sect:conclusion}
 
In this paper, we provide a game-theoretic mechanism design to study a fundamental issue in statistical learning.
When observing the performance $v$ over all subsets of the candidate features, we set up a coalitional game.
However, we cannot simply apply the Shapley value to the game as the efficiency axiom is invalid.
By modifying the axiom to fairly distribute the expected payoff of the potential model, the design sets its objective to lay all the features in a Procrustean bed, allowing neither expected under-fitting nor expected over-fitting.
Irrelevant features behave oddly under the procrustean rule; their marginal effects in the game oscillate around a negligible level, like that of a dummy player.
The rule is set, however, at the aggregate level, not at the individual level, because individual features are not \textit{a priori} known as relevant or irrelevant.
By estimating the unique solution to the objective, we convert the feature selection into a statistical decision problem of hypothesis testing.

A few advantages could explain the $\lambda$ method's superior performance over the traditional methods, as demonstrated in the simulation studies. 
First, by exactly matching the individual contributions with the expected collective performance, we effectively invalidate any discrepancy measurement which other methods attempt to minimize.
Secondly, the formula and estimation procedure are identical for any choice of the payoff function $v$.
Also, as illustrated in the simulation studies, the results slightly vary with the choices of $v$ while the other methods largely depend on their model fit statistics.
Lastly, the $\lambda$ approach has no exchange between the model fit and model size where the fit statistic is subject to multiplicity and vulnerable to other assumptions (e.g, normality and independence). 
The penalty, if any, is replaced with an exact match.

One could extend the solution from different angles.
First, in a real data analysis, the payoff function $v$ generally contains heterogeneous uncertainty. 
In the last section, for example, all five $v(T)$'s are subject to the sampling errors, varying with $T$.
Thus, the solution (\ref{eq:loss_SV_prior}) is not deterministic and the $\hat \sigma_i$ from Algorithm \ref{alg:loss_random_sq} could underestimate the standard deviation of $\lambda_i[v]$.
One remedy is to choose a collective significance statistic for $v(T)$ or an individual significance statistic for its sequential increment $\phi^\tau_i$.
For example, one could use the F-statistic for $v(T)$, and $X_i$'s absolute t-statistic or likelihood ratio score as $\phi^\tau_i$.
The remedy makes the decision based on $\hat \lambda_i$ alone and ignores its standard error.
Secondly, the functional equation (\ref{eq:divide_expectation_lambda}) only matches the expected values. 
The mean value captures one type of central tendency but ignores the other aspects of the complete profile of $v$, such as the shape, the likelihood, and the diminishing marginality.
Next, apparently, not all functions of $v$ work well in Algorithms \ref{alg:loss_random_sq} and \ref{alg:sequential_acceptance}, for example, when $v(T)=0$ for all $T$'s. 
One question is what $v$'s ensure that the solution enjoys some good properties, such as Oracle and efficiency.
Of course, one or more alternative $v$'s could be used for cross-validation or robustness check.
Finally, in Algorithms \ref{alg:loss_random_sq} and \ref{alg:sequential_acceptance}, the same $v(T)$ may be computed multiple times for a given $T$. 
A technical challenge is how to avoid the multiplicity.

%

%

\clearpage
\newpage
\setcounter{page}{1}
\section*{Supplemental Appendix: The Proofs\footnote{Supplement to ``Feature Selection by a Mechanism Design" by Xingwei Hu}}

\subsection*{A1. Proof of Theorem \ref{thm:divide_expectation_lambda}}\label{prf:loss_division}
\noindent
There are $\left ( \begin{array}{c} n \\ t \end{array} \right ) = \frac{n!}{t!(n-t)!}$ subsets of size $t$.
By the NID assumption, we have $P_{_T}=\frac{t! (n-t)!}{n!} \delta_t$ and
$$
\mathbb{E} v(\mathbf{S}) - v(\emptyset)
= (\delta_0-1) v(\emptyset) + \sum\limits_{T\subseteq \mathbb{N}: T\not = \emptyset}  \frac{t! (n-t)!}{n!} \delta_t v(T).
$$

\noindent
We also represent the sum of the loss functions (\ref{eq:lambda}) in terms of $v(T)$,
$$
\begin{array}{rcl}
\sum\limits_{i\in \mathbb{N}} \lambda_i[v]
&=&
\sum\limits_{i\in \mathbb{N}} \ \sum\limits_{T \subseteq \mathbb{N}: i\not \in T} 
P_{_T} \left [ v(T \cup \overline{i}) - v(T) \right ]  \\

&=&
\sum\limits_{i\in \mathbb{N}} \ \sum\limits_{T \subseteq \mathbb{N}: i\not \in T} 
P_{_T} v(T \cup \overline{i}) 
- 
\sum\limits_{i\in \mathbb{N}} \ \sum\limits_{T \subseteq \mathbb{N}: i\not \in T} 
P_{_T} v(T) \\

&\stackrel{Z=T \cup \overline{i}}{=}&
\sum\limits_{i\in \mathbb{N}} \ \sum\limits_{Z \subseteq \mathbb{N}: i\in Z}
P_{_{Z\setminus \overline{i}}} v(Z)
- 
\sum\limits_{T \subseteq \mathbb{N}} v(T)
\sum\limits_{i \in \mathbb{N} \setminus T} P_{_T}  \\

&\stackrel{T=Z}{=}&
\sum\limits_{T \subseteq \mathbb{N}: T\not=\emptyset} v(T) 
\sum\limits_{i \in T} P_{_{T\setminus \overline{i}}}
- 
\sum\limits_{T \subseteq \mathbb{N}} 
\frac{(n-t)t!(n-t)!\delta_t}{n!}
v(T)  \\

&=&
\sum\limits_{T \subseteq \mathbb{N}: T\not=\emptyset} 
\frac{t (t-1)! (n-t+1)!\delta_{t-1}}{n!} v(T) 
- \sum\limits_{T \subseteq \mathbb{N}} 
\frac{(n-t) t! (n-t)!\delta_t}{n!} v(T) \\

&=&
- n\delta_0 v(\emptyset) + \sum\limits_{T \subseteq \mathbb{N}: T\not=\emptyset} 
\frac{t! (n-t)! \left [(n-t+1)\delta_{t-1}-(n-t)\delta_t \right ]}{n!}
v(T) 
\end{array}
$$
\noindent
For (\ref{eq:divide_expectation_lambda}) to hold for any $v$, the coefficients of $v(T)$ match on both sides of
(\ref{eq:divide_expectation_lambda}):
$$
\left \{
\begin{array}{rcl}

-n\delta_0
&=&
\delta_0 - 1, \\

(n-t+1)\delta_{t-1}-(n-t)\delta_t
&=&
\delta_t, \hspace{1.5cm} \forall \ 1\le t \le n.
\end{array}
\right .
$$
The only solution to the system is $\delta_0 = \delta_1 = \cdots = \delta_n = \frac{1}{n+1}$, i.e.,
$P_{_T} = \frac{t!(n-t)!}{(n+1)!}$.

\subsection*{A2. Proof of Theorem \ref{thm:expected_SV}} \label{prf:expected-SV}
\noindent
If $i\not \in Z \subseteq \mathbb{N}$, then  $i$ is a dummy player in $v_{_Z}$ and $\Psi_i[v_{_Z}]=0$.
When $i \in Z$, by (\ref{eq:shapley_value}) and the shorthand $t=|T|$, its Shapley value in $v_{_Z}$ is
$$
\begin{array}{rcl}
\Psi_i[v_{_Z}]
&=&
\sum\limits_{W\subseteq \mathbb{N}: i\in Z\setminus W}  \frac{(|W|)!(n-|W|-1)!}{n!} \left [ v_{_Z}(W\cup \overline{i}) - v_{_Z} (W) \right ] \\
&=&
\sum\limits_{W\subseteq \mathbb{N}: i\in Z\setminus W}  \frac{(|W|)!(n-|W|-1)!}{n!} \left [ v(Z\cap (W \cup \overline{i})) - v (Z\cap W) \right ]\\
&\stackrel{T=Z\cap W}{=}&
\sum\limits_{T\subseteq Z: i\in Z\setminus T} \frac{v(T\cup \overline{i}) - v(T)}{n!} \sum\limits_{W\subseteq \mathbb{N}:W \cap Z = T}  (|W|)!(n-|W|-1)! \\
&\stackrel{U=W\setminus T}{=}&
\sum\limits_{T\subseteq Z: i\in Z\setminus T} \frac{v(T\cup \overline{i}) - v(T)}{n!} \sum\limits_{U\subseteq \mathbb{N}\setminus Z}  (t+|U|)!(n-t-|U|-1)! \\
&=&
\sum\limits_{T\subseteq Z: i\in Z\setminus T} \frac{v(T\cup \overline{i}) - v(T)}{n!} \sum\limits_{u=0}^{n-|Z|}\sum\limits_{U\subseteq \mathbb{N}\setminus Z: |U|=u} (t+u)!(n-t-u-1)! \\
&=&
\sum\limits_{T\subseteq Z: i\in Z\setminus T} \frac{v(T\cup \overline{i}) - v(T)}{n!}\ \sum\limits_{u=0}^{n-|Z|}\ (t+u)!(n-t-u-1)! 
	\left ( \begin{array}{c} n-|Z| \\ u \end{array} \right ) \\
&=&
\sum\limits_{T\subseteq Z: i\in Z\setminus T} \frac{(n-|Z|)! t! (|Z|-t-1)!}{n!}\left [v(T\cup \overline{i})-v(T) \right ] \\
&&
\sum\limits_{u=0}^{n-|Z|}\ \left ( \begin{array}{c} t+u \\ u \end{array} \right ) \left ( \begin{array}{c} n-t-u-1 \\ n-|Z| - u\end{array}   \right )\\
&=&
\sum\limits_{T\subseteq Z: i\in Z\setminus T} \frac{(n-|Z|)! t! (|Z|-t-1)!}{n!}\left [v(T\cup \overline{i})-v(T) \right ] 
	\left ( \begin{array}{c} n \\ n-|Z| \end{array}   \right )  \\
&=&
\sum\limits_{T\subseteq Z: i\in Z\setminus T} \frac{t!(|Z|-t-1)!}{(|Z|)!} \left [ v(T\cup \overline{i}) - v(T)\right ]
\end{array}
$$
where Lemma 1 of Hu (2006) simplifies the last sum in the seventh equality.
Given the NID assumption for $\mathbf{S}$, we apply the above identities of $\Psi_i[v_{_Z}]$ to get
$$
\begin{array}{rcl}
\mathbb{E} \Psi_i [v_{_\mathbf{S}}] 
&=&
\sum\limits_{Z\subseteq \mathbb{N}} P_{_Z} \Psi_i[v_{_Z}] 
=
\sum\limits_{Z\subseteq \mathbb{N}: i \in Z} P_{_Z} \Psi_i[v_{_Z}] \\
&=&
\sum\limits_{Z\subseteq \mathbb{N}: i\in Z} \frac{(|Z|)!(n-|Z|)!}{n!}\delta_{_{|Z|}} \sum\limits_{T\subseteq Z: i\in Z\setminus T} \frac{t!(|Z|-t-1)!}{(|Z|)!} \left [ v(T\cup \overline{i}) - v(T)\right ] \\
&=&
\sum\limits_{T\subseteq \mathbb{N}: i \not \in T} \frac{t!}{n!} \left [v(T\cup \overline{i}) - v(T)\right ]\ \sum\limits_{Z\subseteq \mathbb{N}:i\in Z, Z \supseteq T} (n-|Z|)! (|Z|-t-1)! \delta_{_{|Z|}}\\
&=&
\sum\limits_{T\subseteq \mathbb{N}: i \not\in T} \frac{t!}{n!} \left [v(T\cup \overline{i}) - v(T)\right ] \\
&&
\sum\limits_{z=t+1}^n \ \sum\limits_{Z\subseteq \mathbb{N}: |Z|=z, Z \supseteq T, i\in Z, i\not \in T} (n-z)! (z-t-1)! \delta_z \\
&=&
\sum\limits_{T\subseteq \mathbb{N}: i \not\in T} \frac{t!}{n!} \left [v(T\cup \overline{i}) - v(T)\right ] \sum\limits_{z=t+1}^n  (n-z)! (z-t-1)! \delta_z
\left ( 
\begin{array}{c}
n-t-1 \\
z-t-1
\end{array}  
\right )
\\

&=&
\sum\limits_{T\subseteq \mathbb{N}: i \not\in T} \frac{t!(n-t-1)!}{n!} \left [ v(T\cup \overline{i}) - v(T)\right ] \sum\limits_{z=t+1}^n \delta_z.  
\end{array}
$$
Therefore, comparing $\mathbb{E} \Psi_i [v_{_\mathbf{S}}]$ with $\lambda_i [v]$ in (\ref{eq:loss_SV_prior}), we have
$$
\sum\limits_{z=t+1}^n \delta_z = \frac{n-t}{n+1}, \quad \forall \ t = 0, 1, ..., n-1,
$$
which has the only solution $\delta_0 = \delta_1 = \cdots =\delta_n = \frac{1}{n+1}$, i.e., $P_{_T} = \frac{t!(n-t)!}{(n+1)!}$.

\subsection*{A3. Proof of  Theorem \ref{thm:sq}}\label{prf:thm-sq}

\noindent
As there are $n!$ orderings in $\Omega$, each ordering $\tau$ occurs with probability $\frac{1}{n!}$. 
Also, there are $(|\Xi_i^\tau|)!$ permutations in $\Xi_i^\tau$ and $(n-1-|\Xi_i^\tau|)!$ in $\mathbb{N}\setminus \Xi_i^\tau \setminus \overline{i}$, the set of players preceded by $i$ in $\tau$.
Therefore, the probability of $\Xi_i^\tau = T$ is $\frac{(|\Xi_i^\tau|)!(n-1-|\Xi_i^\tau|)!}{n!} = \frac{t!(n-1-t)!}{n!}$.
Using the law of total expectation, we have
$$
\begin{array}{rcl}
\mathbb{E} \left [ \frac{n-|\Xi_i^\tau|}{n+1} \phi^\tau_i \right ] 
&=& 
\sum\limits_{T\subseteq \mathbb{N}\setminus \overline{i}} \mathrm{Prob} ( \Xi_i^\tau=T) \mathbb{E} \left [ \frac{n-|\Xi_i^\tau|}{n+1} \phi^\tau_i  \  | \ \Xi_i^\tau=T   \right ] \\
&=&  
\sum\limits_{T\subseteq \mathbb{N}\setminus \overline{i}}  \frac{t!(n-1-t)!}{n!} \frac{n-t}{n+1} \left [v(T \cup \overline{i}) - v(T) \right ] \\
&=& 
\sum\limits_{T\subseteq \mathbb{N}\setminus \overline{i}}  \frac{t!(n-t)!}{(n+1)!} \left [v(T \cup \overline{i}) - v(T) \right ] 
=
\lambda_i [v].
\end{array}
$$
\end {document}